\newcommand{\R}{\mathbb{R}}
\newcommand{\argmax}{\mathrm{argmax}}
\title{\bf 
A precortical module for robust CNNs to light variations}
\author{R. Fioresi,  J. Petkovic}
\newtheorem{proposition}{Proposition}[section]
\newtheorem{definition}[proposition]{Definition}
\newcommand{\cE}{\mathcal{E}}
\newcommand{\cK}{\mathcal{K}}
\newcommand{\cR}{{\mathcal{R}}}
\newcommand{\cS}{{\mathcal{S}}}
\newcommand{\regu}{\mathrm{reg}}
\newcommand{\beq}{\begin{equation}}
\newcommand{\eeq}{\end{equation}}
\newcommand{\lra}{\longrightarrow}
\newcommand{\ret}{E}
\newcommand{\vis}{V}
\newcommand{\gan}{\widetilde{E}}
\newcommand{\Rprime}{\widetilde{\cR}}
\newcommand{\fun}{F}
\newcommand{\dom}{D}
\newcommand{\reti}{RetiLeNet }
\newcommand{\lenet}{LeNet 5 }
\begin{document}

\centerline{\large \bf A precortical module for robust CNNs}

\medskip
\centerline{\large \bf  to light variations}

\bigskip
\centerline{R. Fioresi\footnote{University of Bologna, Italy},  
J. Petkovic\footnote{University of Mainz, Germany}}

\begin{abstract} 
We present a simple mathematical model for the mammalian low visual pathway, 
taking into account its key elements: retina, lateral
geniculate nucleus (LGN), primary
visual cortex (V1).
The analogies between the cortical level of the visual system 
and the 
structure of popular CNNs, used in image classification tasks, suggests  
the introduction of an additional preliminary convolutional module inspired 
to  precortical neuronal circuits to improve robustness with respect 
to global light intensity and contrast variations in the input images. 
We validate our hypothesis on the popular databases MNIST, FashionMNIST and 
SVHN, obtaining significantly more robust CNNs with respect 
to these variations, once such extra module is added. 
\end{abstract}

\section{Introduction}\label{intro-sec}

The fascinating similarities between 
CNN architectures and the modeling of the mammalian low visual pathway
are a current active object of
investigation (see \cite{montobbio}, \cite{bertoni}, 
\cite{ecker} and refs. therein). 
Historically, the study of border and visual perception started around 1920's 
with the Gestalt psychology 
formalization (Pr\"agnanz laws) of the
perception (lines, colors and contours, see \cite{mather} and refs. therein).
Then, the foundational work \cite{hubel} introduced a more
scientific approach to the subject, defining the concept of
receptive field, simple and complex cells, together with an anatomically
sound description of the visual cortex in mammals.

This study paved the way to the mathematical modeling for 
such structures. In particular special attention was given to
the primary visual cortex V1 (see \cite{hoffmann}),
whose orientation sensitive simple cells, together with the complex and
hypercomplex cells, inspired the modeling of algorithms
\cite{field2} contibuting to the spectacular success
of Deep Learning CNNs \cite{lbbh}.
With a more geometric approach in \cite{bc}, the introduction on V1 of a
natural subriemannian metric \cite{petitot, citti}, following
the seminal work \cite{mumford}, led to interesting
interpretations of the border completion mechanism.

While the mathematical modeling became more sophisticated 
(see \cite{petitot2} and refs therein), the insight into the 
physiological functioning of the visual pathway led to
more effective algorithms in image analysis \cite{fd, bertoni}.
In particular, the Cartan geometric point of view on the V1 modeling 
\cite{petitot2},
fueled new interest and suggests a physiological counterpart
for the new algorithms based on group equivariance in 
geometric deep learning approaches
(see \cite{bekkers, welling} and refs therein).

The purpose of our paper is to provide a simple mathematical model for
the low visual pathway, comprehending the retina, the lateral
geniculate nucleus (LGN) and the primary visual cortex (V1) and
to use such model to construct a 
preliminary convolutional module, that
we call \textit{precortical} to enhance the robustness
of popular CNNs for image classification tasks.
We want the CNN to gain the outstanding ability of the human eye
to react to large variations in global light intensity and
contrast. This can be achieved by mimicking the 
\textit{gain tuning} effect implemented the precortical portion of the mammalian visual pathway. This effect consists in the following:
since the low visual circuit needs to respond to a vast
range of light stimuli, that spans over $10$ orders of
magnitude, it is equipped with a lateral inhibition mechanism
which allows a low latency and a high sensitivity response.
This mechanism is functionally embedded in the center-surround receptive fields of retinal bipolar cells, retinal ganglions and LGN neurons and is able to achieve both border enhancing and decorrelation between the perceived light intensity in single pixels and the mean light value in a given image \cite{kandel}. 
We will show that,
if in the early levels of a CNN such receptive fields are learned in the form of convolutional filters,
there is a considerable improvement in accuracy, when
considering dimmed light or low contrast examples, 
i.e. examples not belonging to the statistic of the training set.

\medskip
The organization of this paper is as follows.
We start with a simple mathematical formalization of the low visual pathway, 
which accounts for its key components.
Though this material is not novel, we believe our terse presentation can 
greatly help to elucidate the relation between mathematical entities, 
like bundles or vector fields and the local physiological structure of 
retina and V1, together with their functioning.

The similarities between the inner structure of
CNNs and the physiological visual perception mechanism, 
once appropriately mathematically modeled as above, show that 
popular CNNs structure, for image classification tasks,
do not take into full consideration
the role of the precortical structures, which are responsible
for a correct adjustment to global light intensity and contrast in an image.
Our observations then, suggest the introduction of a 
precortical module
(see also \cite{montobbio, bertoni}), 
which mimics the functioning of retina and LGN
cells and reacts appropriately to the variations of the light
intensity and the contrast. Once such module is introduced, we
verify in our experiments on the MNIST, FashionMNIST and SVHM 
databases that our CNN shows robustness with respect to such large variations.
So effectively our experiments and subsequent
implementations show the importance of the module
introduced in \cite{bertoni}.
 
\medskip
The impact and potential of our approach is that a new simplified,
but accurate, mathematical modeling of the low visual pathway, can lead
to key cues on algorithm design, which add robustness and allow
high performances beyond the type of data the algorithm is trained with,
exactly as it happens for the human visual perception.
In a forthcoming research, we plan to further explore 
this study by analyzing the autoencoder
performances in the border completion, using the mathematical description 
via subriemannian metric geodesic. 

\section{Related work}\label{rw-sec}
The structure of the mammalian visual pathway 
was extensively explored in 
the last century both from an anatomical  and a functional point 
of view (see \cite{visual_anatomy} and
\cite{kandel} and refs therein). 
New aspects, however, are still 
discovered nowadays at each level of its structure: from new cellular types 
\cite{another_small_bistratified} to new functional circuits 
\cite{new_vp_circuit},
shedding more light  
on the formal process of visual information encoding 
(see \cite{direction_connectivity}, 
\cite{laminar_subnetwork}, \cite{intermosaic}). 
The striking correspondence between the training of mammals visual system
and popular CNN training was elucidated in \cite{as}, 
suggesting more exploration
into this direction is necessary. 
Similarities between deep CNN structure 
and the human visual pathway have been then increasingly 
explored in this framework.
From the first appearance of the \emph{neocognitron} \cite{fukushima}, 
the feature 
extracting action of the convolutional module has been widely 
adopted to tackle a 
number of visual tasks, developing models which presented 
striking analogies 
with mammalian cellular subtypes and their receptive fields 
\cite{lbbh, kri, bertoni, montobbio}. 
These similarities were studied, in particular, in relation 
to the deeper components of the visual pathway, starting from the hypercolumnar 
structure of the primary visual cortex V1 and continuing with 
superior processing 
centers, like V4 and the inferior temporal cortex 
(\cite{Mohsenzadeh} and the references therein). 
The exploration of  
similarities with the first stages of visual perception and in particular 
to the receptive field structure of retinal and geniculate units or with 
their precise regularizing effect on the perceived image appears also
in \cite{enroth1966}, \cite{dapello2020}, \cite{evans2021}, 
\cite{Zoumpourlis2017}.
Another recent attempt in this direction is also in \cite{bertoni}, where
a biologically inspired CNN is studied in connection with the Retinex
model \cite{land}, elucidated mathematically in \cite{provenzi} 
and then implemented \cite{morel} (see also refs. therein).
Also in \cite{morgenstern2014} and more recently in \cite{Strisciuglio2020},
\cite{babaiee2021}
appear some observations 
the behaviour of artifician neurons on contrast and more
generally robustness.

\section{The visual pathway}\label{vp-sec}

The visual pathway consists schematically of the following structures: 
the eye, the optic nerve, the lateral geniculates nucleus (LGN), the
optic radiation and the primary visual cortex (V1). In our discussion, we focus only 
on the retina, 
LGN and V1, because 
these are the parts in which the processing of the signal (i.e. the images) requires
a more articulate modeling.

The retina consists of photoreceptor cells, called \textit{receptors}, 
which measure the intensity 
of light. We model part of the retina, the {\it hemiretinal receptoral
layer},  
with a compact simply connected domain $\ret$ in $\R^2$.
We take a portion and not the whole retina, because
we want to avoid to cross the {\it median cleft line} in the visual field,
which is interpreted and processed in a more complex way (see
\cite{ah}).

We define then a \textit{receptorial activation field} as
a function
\beq\label{hem-rec}
\begin{array}{cccc}
\cR: & \ret & \lra & \R \\
 &(x,y)& \mapsto & \cR(x,y)
\end{array}
\eeq
associating to each point $(x,y)$, corresponding to a
receptor in the retina, an activation rate given by the
scalar $\cR(x, y)$. We do not assume $\cR$ to be continuous,
however, by its physiological significance, it will be bounded.

\begin{wrapfigure}{r}{0.49\textwidth}
  \begin{center}
    \includegraphics[width=0.49\textwidth]{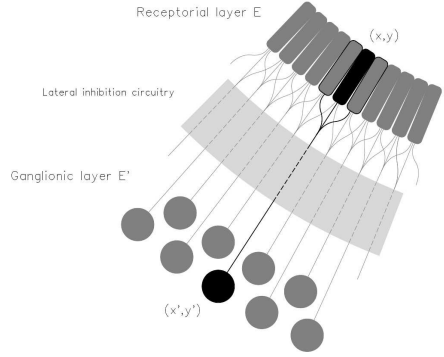}
  \end{center}
  \caption{Schematic structure of receptorial and ganglionic layers in retina}
\end{wrapfigure}

We may then 
interpret $\cR$ as coming from an image in the visual field. 
We will not distinguish between on and off receptors, as their
final effect on the downstream neurons is the same from a logical point of view.
We assume $\cR(x, y)$ to be proportional to the intensity of the light falling
on $(x, y)$. The ganglionic layer $\gan$ (see Fig. 1) 
sits a few layers down the visual pathway. 


To each receptor $(x, y)$ in $\ret$ there is a corresponding ganglion 
$(x',y')$ in $\gan$ with its
receptive field centered in $(x', y')$ so that we have a natural 
distance preserving identification between
$\ret$ and $\gan$, given by an isometry $G:\ret \lra \gan$.

\bigskip
The ganglionic activation pattern, however, 
is quite different from the hemiretinal receptors one (\ref{hem-rec}):
\begin{equation}\label{gan-rec}
\Rprime:\gan \lra \R, \quad
\Rprime(x',y') = \int_{U_{\rho}(x,y)}\cK(u,v)\cR(u,v)\,du\,dv
\quad            \hbox{with} \quad G(x,y)=(x',y') 
\end{equation}
where
            \begin{equation*}
                U_{\rho}(x,y)=\big\{(u,v) \in \mathbb{R}^2 : 
(u-x)^2+(v-y)^2 \leq \rho^2\big\},
\end{equation*}
 \begin{equation*}
                \cK(x,y)=\begin{cases} \pm 1 & \mathrm{if} \quad
(u-x)^2+(v-y)^2 \leq (\rho-\epsilon)^2 \\ \\
\mp 1 & \mathrm{if} \quad
(\rho-\epsilon)^2 < (u-x)^2+(v-y)^2 \leq \rho^2 
\end{cases}
\end{equation*}
The identification  between $\ret$ and $\gan$ given by $G$ is {\sl not}
a manifold morphism: in fact the correspondence between functions
follows (\ref{gan-rec}), which is 
an integral transform with kernel $\cK(x,y)$. This models 
effectively the mechanism of
firing of hemiretinal receptors: for each activation disc
we always have an inhibition crown around it. This is
the key mechanism, responsible for the border and contrast
enhancement,
that we shall implement with our
precortical module in Sec. \ref{exp-sec}.

We notice an important consequence, whose proof is  in App. \ref{app-sec1},
to ease the reading.

\begin{proposition}\label{lipschitz-prop}
The hemiretinal ganglionic activation field $\Rprime$ is Lipschitz continuous 
in both variables on $\gan$.
\end{proposition}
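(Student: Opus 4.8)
The plan is to use the isometry $G:\ret\to\gan$ to transport the problem to $\ret$: since $G$ preserves distances, it suffices to establish the Lipschitz estimate for the pullback $f:=\Rprime\circ G$ on $\ret$, i.e. for the map
\[
f(x,y)=\int_{U_\rho(x,y)}\cK(u,v)\,\cR(u,v)\,du\,dv .
\]
The first step is to read (\ref{gan-rec}) as a correlation against a fixed, compactly supported kernel. The sign of $\cK$ depends only on the displacement $(u-x,v-y)$, so, writing $\phi(s,t)=2\,\mathbf{1}_{D_{\rho-\epsilon}}(s,t)-\mathbf{1}_{D_\rho}(s,t)$ for the center--surround profile assembled from the indicators of the concentric disks $D_{\rho-\epsilon}$ and $D_\rho$, we have $f(x,y)=\int_{\R^2}\phi(u-x,v-y)\,\cR(u,v)\,du\,dv$, after extending $\cR$ by zero outside $\ret$ (this keeps it bounded).

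Second, I would estimate the increment directly. For a displacement $h=(h_1,h_2)$, a change of variables and the triangle inequality give
\[
|f(p+h)-f(p)|\;\le\;\|\cR\|_\infty\,\big\|\phi(\cdot-h)-\phi\big\|_{L^1},
\]
where the boundedness of $\cR$ --- the only hypothesis made on it --- is exactly what is used. Everything is thereby reduced to controlling the $L^1$ modulus of translation of the \emph{fixed} kernel $\phi$.

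Third, and this is the geometric crux, I would bound $\big\|\phi(\cdot-h)-\phi\big\|_{L^1}$ linearly in $|h|$. As $\phi$ is a finite linear combination of disk indicators, it suffices to bound the translation modulus of a single $\mathbf{1}_{D_r}$, which equals the Lebesgue measure of the symmetric difference $D_r\,\triangle\,(D_r+h)$. A point of this symmetric difference lies at distance at most $|h|$ from the circle $\partial D_r$, so the symmetric difference is contained in the annulus $\{\,r-|h|<|x|\le r+|h|\,\}$, of area at most $\pi\big((r+|h|)^2-(r-|h|)^2\big)=4\pi r|h|$. Summing the two contributions yields $\big\|\phi(\cdot-h)-\phi\big\|_{L^1}\le C(\rho,\epsilon)\,|h|$, whence $|f(p+h)-f(p)|\le \|\cR\|_\infty\,C(\rho,\epsilon)\,|h|$; this is Lipschitz in the Euclidean distance on $\ret$, hence in each variable, and it transports to $\gan$ through the isometry $G$.

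The main obstacle is precisely this last step, and it is where the structure of the model is essential: for a generic bounded $\cR$ and a generic integrable kernel, convolution delivers only uniform continuity, since the $L^1$ modulus of translation can decay arbitrarily slowly. What saves the argument is that the center--surround kernel is built from indicators of disks, whose boundaries have finite length; it is this finite-perimeter property that turns the symmetric-difference area into a genuinely linear function of $|h|$. I would therefore emphasize that only the boundedness of $\cR$ and the disk geometry of the receptive field enter, so that no continuity of $\cR$ is required, in agreement with the remark preceding the statement.
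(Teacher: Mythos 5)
Your proof is correct, and it takes a genuinely different route from the one in the paper's appendix. The paper estimates $|\Rprime(p)-\Rprime(q)|$ directly by comparing the two integration discs $A=U_\rho(p)$ and $B=U_\rho(q)$, splitting into the cases $\|p-q\|>2\rho$ and $\|p-q\|\le 2\rho$, and in the second case invoking the explicit closed-form expression for the area of $A\smallsetminus B$ as "an algebraic function of $\|p-q\|$''. You instead rewrite the transform as a correlation of $\cR$ against the fixed center--surround profile $\phi=2\,\mathbf{1}_{D_{\rho-\epsilon}}-\mathbf{1}_{D_\rho}$, reduce everything to the $L^1$ modulus of translation of $\phi$, and bound the symmetric difference $D_r\,\triangle\,(D_r+h)$ by the annulus of width $2|h|$ around $\partial D_r$. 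Your version buys three things. First, it actually carries the kernel $\cK$ through the estimate; the paper's displayed computation silently replaces $\int_A \cK\,\cR$ by $\int_A \cR$, and its claim that $\|p-q\|>2\rho$ forces $A\smallsetminus B=B\smallsetminus A=\varnothing$ is backwards (disjoint discs give $A\smallsetminus B=A$). Second, your annulus argument supplies the quantitative step the paper leaves implicit: "algebraic in $\|p-q\|$'' does not by itself imply a linear bound (algebraic functions can have unbounded derivatives at endpoints), whereas the perimeter bound $\mu(D_r\triangle(D_r+h))\le 4\pi r|h|$ is exactly what is needed, with an explicit constant. Third, it isolates the real reason the statement holds --- the finite perimeter of the receptive-field discs --- and makes clear that no regularity of $\cR$ beyond boundedness is used, matching the remark before the proposition. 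One microscopic caveat: for $|h|>r$ the set $\{r-|h|<|x|\le r+|h|\}$ is a full disc and its area exceeds $\pi\bigl((r+|h|)^2-(r-|h|)^2\bigr)$; the bound $4\pi r|h|$ still holds there since the symmetric difference has measure at most $2\pi r^2\le 4\pi r|h|$, so you may want to add that one line.
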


This proposition encodes the fact that the visual system
is able to reconstruct a border percept also for non continuous
images. This fact was already noticed in \cite{mumford}, where edge 
interruptions are taken into account and
defined as cusps or elementary catastrophes. 
We illustrate this phenomenon in Fig. 2, which is perceived as a 
unitary curvilinear
shape with clear borders, though composed of isolated black dots.
There is indeed a further smoothing reconstruction carried out
in the LGN, so that our image will provide us with a smooth function
in place of $\Rprime$, \cite{origins-neurogeo}.
We shall not describe such modeling for the present work.

\medskip
We now come to the last portion of the visual pathway: the 
{\it primary visual hemicortex}, that we shall still denote with $V1$. 
The \textit{retinotopic map} is a distance preserving
homeomorphism between the hemiretinal receptorial layer and 
the primary visual hemicortex (see \cite{ah} for a concrete realization).
We can therefore identify $V1$ with a compact domain $\vis$ in $\R^2$.

\setlength{\intextsep}{1mm}
\begin{wrapfigure}[14]{r}{0.35\textwidth}
    \centering
    \includegraphics[width=0.3\textwidth]{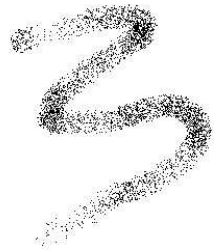}
    \captionsetup{width=0.3\textwidth}
  \caption{Border percepts for a non continuous image}
\end{wrapfigure}

The pointwise identification, however, is not sufficient to model
the behaviour of $V1$. In fact for each point of the visual cortex 
we have three main information to take into account:
\begin{enumerate}
    \item The absolute position of the corresponding point 
in the retina, which receives a stimulus;
    \item the orientation $\theta$ of some perceived edge through the point
(simple and complex cells);
    \item the curvature $k$ of some perceived edge through 
the point (hypercomplex cells).
\end{enumerate}
While the first feature is encoded by the points in the domain $V$,
additional modelization is needed for
the construction of orientation and curvature information. 
For this reason, a point in $V1$ is
identified with a biological structure, called \textit{hypercolumn}, which
contains a plethora of cell families, 
each analyzing a specific aspect of the visual information.
We assume that at each point of $V1$ is present a
full set of simple, complex and hypercomplex orientation columns 
(a so called  \textit{full orientation hypercolumn}).

We briefly
recap the various types of cells in $V1$, since their behaviour plays a key role in our modeling. 
\begin{itemize}
\item {\it Simple cells}: 
they exhibit a rich and multifaceted behavioural pattern in response to positional, 
orientational and dynamic features of a light input. 
Traditionally, for their modeling is used
the asymmetrical part of a Gabor filter,
though in recent works more importance is given
to the role of the LGN and the actual 
neuroanatomical structure (see \cite{lindeberg, petkov}).
\item {\it Complex cells}: they
receive input directly from simple cells 
(see \cite{hubel}) and they show a 
linear response depending on the orientation of
some static stimulus over a certain receptive field. It is 
important to note that
this response is invariant under a $180^0$ rotation of the stimulus.
\item {\it Hypercomplex cells}: 
they are also called end-stopped cells and they are maximally activated by
oriented stimuli positioned in the central region of their receptive field. They
are maximally inhibited by peripheric stimuli with the same orientation. Such
cells are therefore not particularly reactive to long straight lines, 
firing briskly if perceiving curves or corners.
\end{itemize}
As we shall see in our
next section, in order to take into account all different cells behaviours
in the hypercolumn, we need to model $V1$ using the mathematical concept
of {\sl fiber bundle}.

\section{The Primary Visual Cortex}\label{v1-sec}

We want to provide a mathematical description of V1, 
starting from the actual neuroanatomical structures and taking into account
the combined effects of complex and simple cells.
Though this material is mostly known (see \cite{petitot, citti, bc, hoffmann}),
our novel and simplified presentation will elucidate the role of the key components
of the visual pathway, while keeping a faithful representation of the 
neuroanatomical structures.

Let $\dom \subset \R^2$ be compact, simply connected.
We start by defining the {\sl orientation} of a function $F:\dom \lra \R$;
it will be instrumental for our modelization of $V1$.
On the manifold $D \times S^1$ 
define the vector field $Z$:
$$
Z(x,y,\theta) = -\sin \theta \; \partial_x+\cos \theta \; \partial_y
$$
where as usual $\partial_x$, $\partial_y$ form a basis for the tangent space
of $\R^2$ at each point and $\theta$ is the coordinate for $S^1$.
To ease the notation we drop $x,y$ in the expression of $Z$, writing
$Z(\theta)$ in place of $Z(x,y,\theta)$.

\begin{definition}\label{def-or}
{\rm Let $\dom \subset \R^2$ be compact and simply connected
and $\fun:\dom \lra \R$ a smooth function. 
Let $\regu(\dom) \subset \dom$ be the subset of regular points of 
$\fun$ (i.e. $p \in D$, $dF(p) \neq (0,0)$). We define the \textit{orientation} of $\fun$ as: 
\beq    
\begin{array}{ccccc}
        \Theta: & \regu(E') &\longrightarrow & S^1&\\
        & (x,y) &\longmapsto &\Theta(x,y) &:= \argmax_{\theta \in S^1}
{\bigl\{Z(\theta)\fun(x,y)\bigr\}}
    \end{array}
\eeq}
\end{definition}

The map $\Theta$ is well defined because of the following proposition. 

\begin{proposition}\label{border-prop}
Let $\fun:\dom \lra \R$ as above and $(x_0,y_0)\in \dom$ a regular point for $\fun$.
Then, we have the following:
\begin{enumerate}
\item There there exists a unique $\theta_{x_0,y_0} \in S^1$ for which
the function $\zeta_{x,y}: S^1 \longrightarrow \R$, 
$\zeta_{x,y}(\theta):= Z(\theta) \, \fun(x,y)$ attains its maximum.
\item The map $\Theta:\regu(D) \longrightarrow  S^1$,
$\Theta(x,y)=\theta_{x,y}$
is well defined and differentiable.
\item The set:
$$
\Phi=\{(x,y,\Theta(x,y)) \in \dom \times S^1 : \Theta(x,y) = \theta_{x,y}\}
$$
is a regular submanifold of $\dom \times S^1$.
\end{enumerate}
\end{proposition}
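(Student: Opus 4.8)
The plan is to treat the three assertions in turn, after the single observation that at a fixed point $\zeta_{x,y}$ is just a sinusoid in $\theta$. Fix a regular point $(x_0,y_0)$ and abbreviate $a=\partial_x\fun(x_0,y_0)$, $b=\partial_y\fun(x_0,y_0)$, so that by the definition of $Z$ we have $\zeta_{x_0,y_0}(\theta)=-a\sin\theta+b\cos\theta$. Geometrically this is the inner product of the gradient $\nabla\fun=(a,b)$ with the unit vector $\mathbf{v}(\theta)=(-\sin\theta,\cos\theta)$, i.e.\ the directional derivative of $\fun$ along $\mathbf{v}(\theta)$. First I would put it in amplitude--phase form $\zeta_{x_0,y_0}(\theta)=\sqrt{a^2+b^2}\,\cos(\theta-\theta_0)$, with $\theta_0$ determined by $\cos\theta_0=b/\sqrt{a^2+b^2}$ and $\sin\theta_0=-a/\sqrt{a^2+b^2}$; this is legitimate precisely because $(a,b)\neq(0,0)$ at a regular point. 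Since $\cos(\theta-\theta_0)$ attains its maximum on $S^1$ at the single point $\theta=\theta_0$, assertion (1) follows, with $\theta_{x_0,y_0}=\theta_0$ characterized by $\mathbf{v}(\theta_{x_0,y_0})=\nabla\fun/|\nabla\fun|$.

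For assertion (2) the cleanest route is the implicit function theorem applied to the critical--point equation. Set $g(x,y,\theta)=\partial_\theta\zeta_{x,y}(\theta)=-\partial_x\fun\,\cos\theta-\partial_y\fun\,\sin\theta$, which vanishes exactly at the unique maximum and the unique minimum of $\zeta_{x,y}$. At the maximizer the second derivative $\partial_\theta^2\zeta_{x,y}$ is strictly negative, hence $\partial_\theta g=\partial_\theta^2\zeta_{x,y}\neq 0$ there; since $\fun$ is smooth, $g$ is smooth, so the theorem yields a smooth local solution $\theta=\Theta(x,y)$, and by the uniqueness established in step one these local solutions patch to the global map $\Theta$ on $\regu(D)$. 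Equivalently, one checks directly that the pair $(\cos\Theta,\sin\Theta)=(\partial_y\fun,-\partial_x\fun)/\sqrt{(\partial_x\fun)^2+(\partial_y\fun)^2}$ depends smoothly on $(x,y)$ on the regular set, exhibiting $\Theta$ as a smooth $S^1$-valued map.

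Finally, assertion (3) is the standard fact that the graph of a smooth map is a regular submanifold: the set $\Phi$ is exactly the image of $\regu(D)$ under the map $(x,y)\mapsto(x,y,\Theta(x,y))$, which is a smooth embedding, since its differential is injective (the first two components already give the identity) and the projection to the first two coordinates provides a continuous inverse onto the image inside $\dom\times S^1$.

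The step requiring the most care is the second one: I must verify that $\partial_\theta g$ is nonvanishing at the \emph{maximizer} and not merely at an arbitrary critical point, so that the implicit function theorem selects the correct branch of the solution set of $g=0$. This is exactly where the sinusoidal normal form of step one and the strict sign of the second derivative do the essential work; the uniqueness of the maximum on $S^1$ guaranteed there is what allows the locally defined smooth branches to be glued into one globally well-defined and differentiable $\Theta$.
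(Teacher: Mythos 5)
Your proof is correct and follows essentially the same route as the paper's: reduce $\zeta_{x,y}$ to a sinusoid in $\theta$ to get a unique maximizer, deduce smoothness of $\Theta$ (the paper merely asserts it, you supply the implicit-function-theorem argument with the nondegenerate second derivative at the maximizer), and recognize $\Phi$ as the graph of the smooth map $\Theta$. Your version is in fact more complete than the paper's very terse proof, which compresses steps (2) and (3) into one-line appeals to uniqueness and the implicit function theorem.
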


\begin{proof}
(1). Since $\zeta_{x,y}$ is a differentiable function on a compact domain
it admits maximum, we need to show it is unique. We can explicitly
express:
$$
\zeta_{x,y}(\theta)=-\sin \theta \; \partial_xF+\cos \theta \; \partial_yF
$$
Since $(\partial_xF, \partial_yF)\neq (0,0)$ and it is constant,
by elementary considerations, taking the derivative of $\zeta_{x,y}$ with
respect to $\theta$ we see the maximum is unique.

(2). $\Theta$ is well defined by $(1)$ and differentiable.

(3). It is an immediate consequence of the implicit function theorem.

\end{proof}

Notice the following important facts:
\begin{itemize}
    \item the locality of the operator $Z(\theta)$  
mirrors the locality of the hypercolumnar anatomical connections;
    \item  its operating principle
is a good description of the combined action of simple and complex cells
(though different from their individual behaviour);
\end{itemize}


We now look at an example given by Fig. \ref{fig:direction_derivative} 
of the behaviour of $Z$ on an image, that we view as a function
$F:\dom \lra \R$.

Here $\dom \cong \gan$, that is $F$ is a hemiretinal
ganglionic receptive field.

We can see in our example that for each 
point $(x,y)$ near the border of the dark circle represented
by the image $F$, there exists a value 
$\theta(x,y)$ 
for which the function $Z(\theta) F(x,y)$ is maximal
(we color in white the maximum, in black the minimum). 
This value is indeed 
the angle between the tangent line to a visually perceived 
orientation in $(x,y)$ 
and the $x$ axis.

\begin{figure}[!b]
\hspace{-1cm}
    \begin{tabular}{ccccc}
        \includegraphics[width=.2\textwidth]{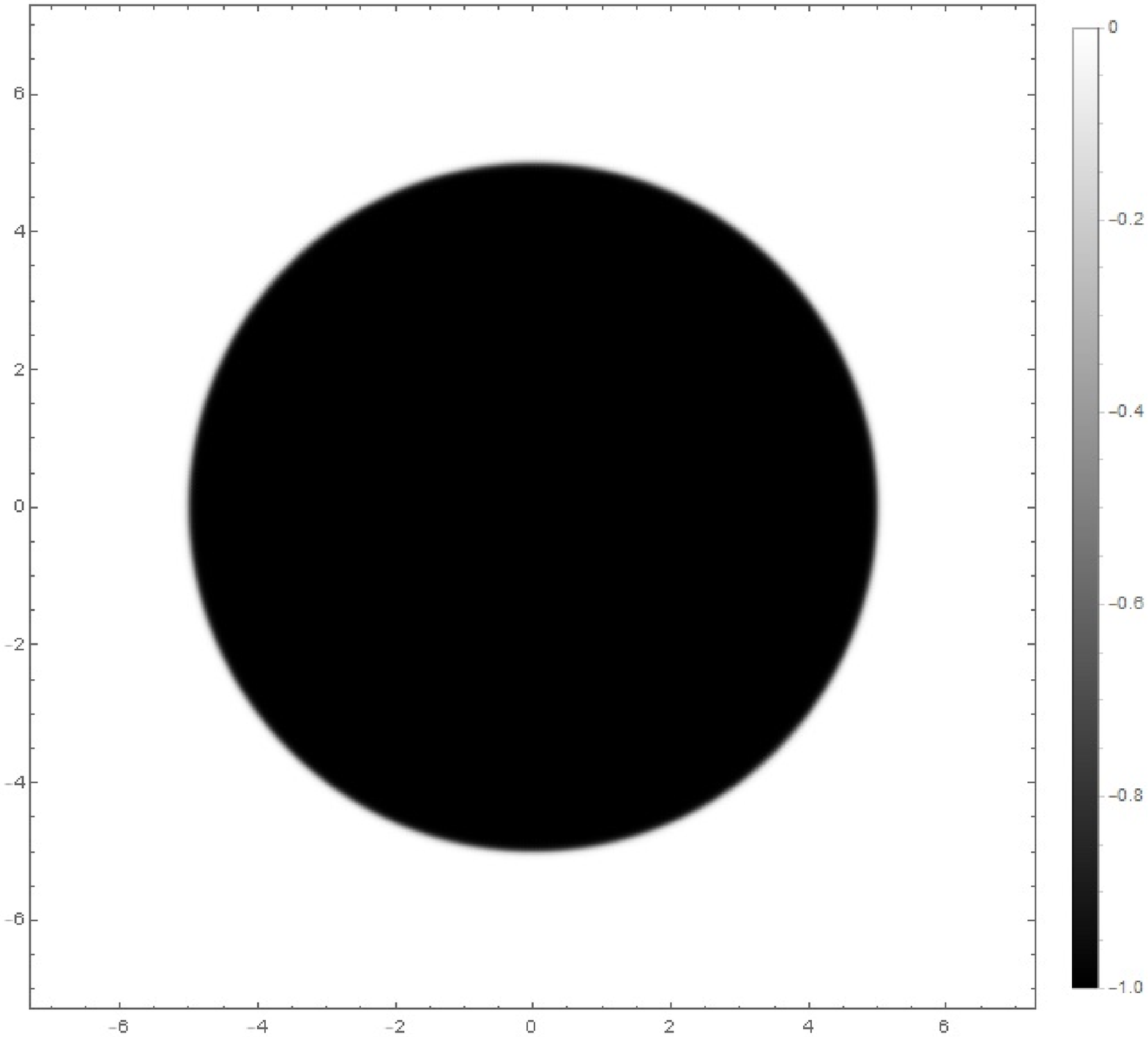}
        \includegraphics[width=.2\textwidth]{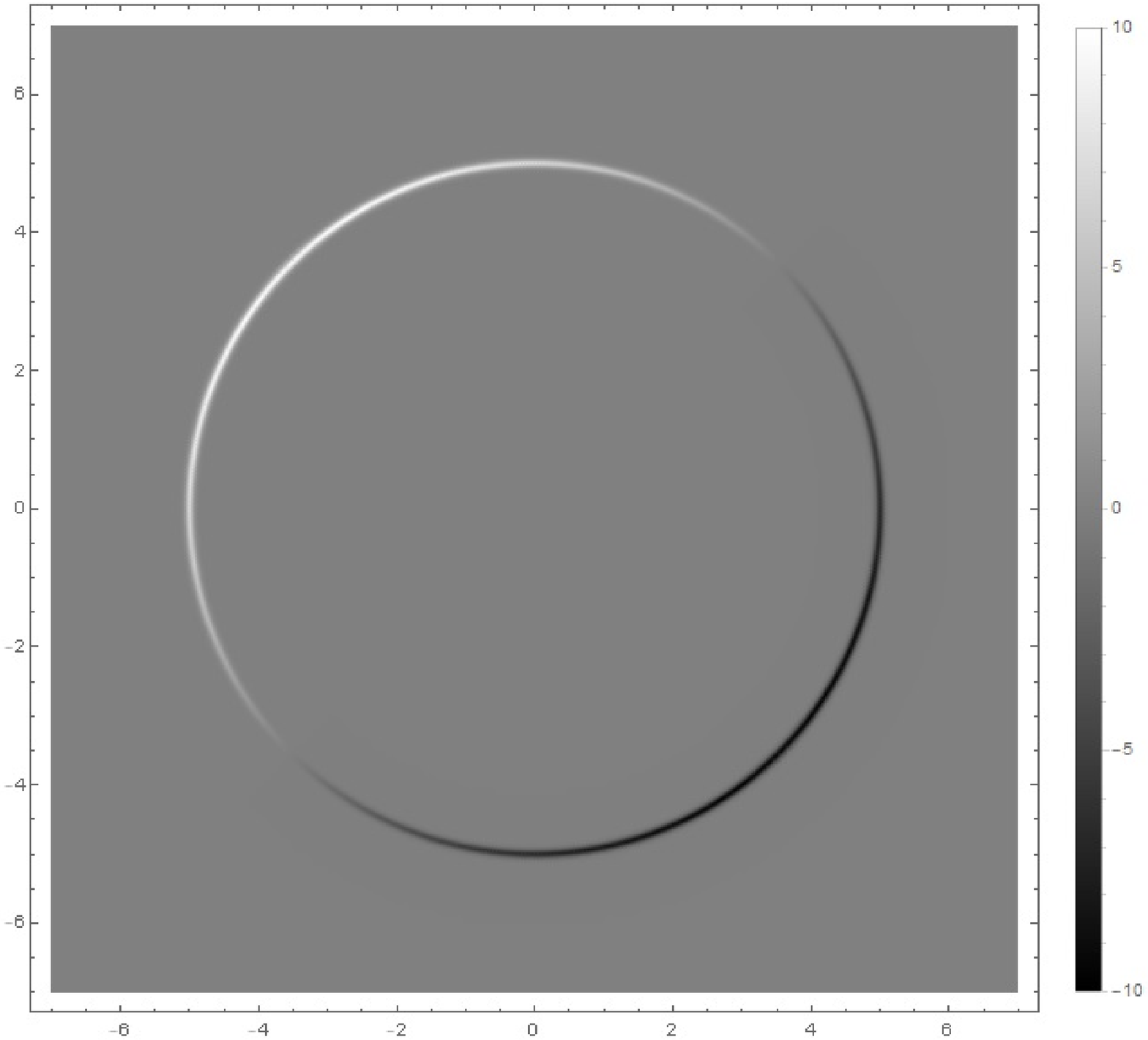}&
        \includegraphics[width=.2\textwidth]{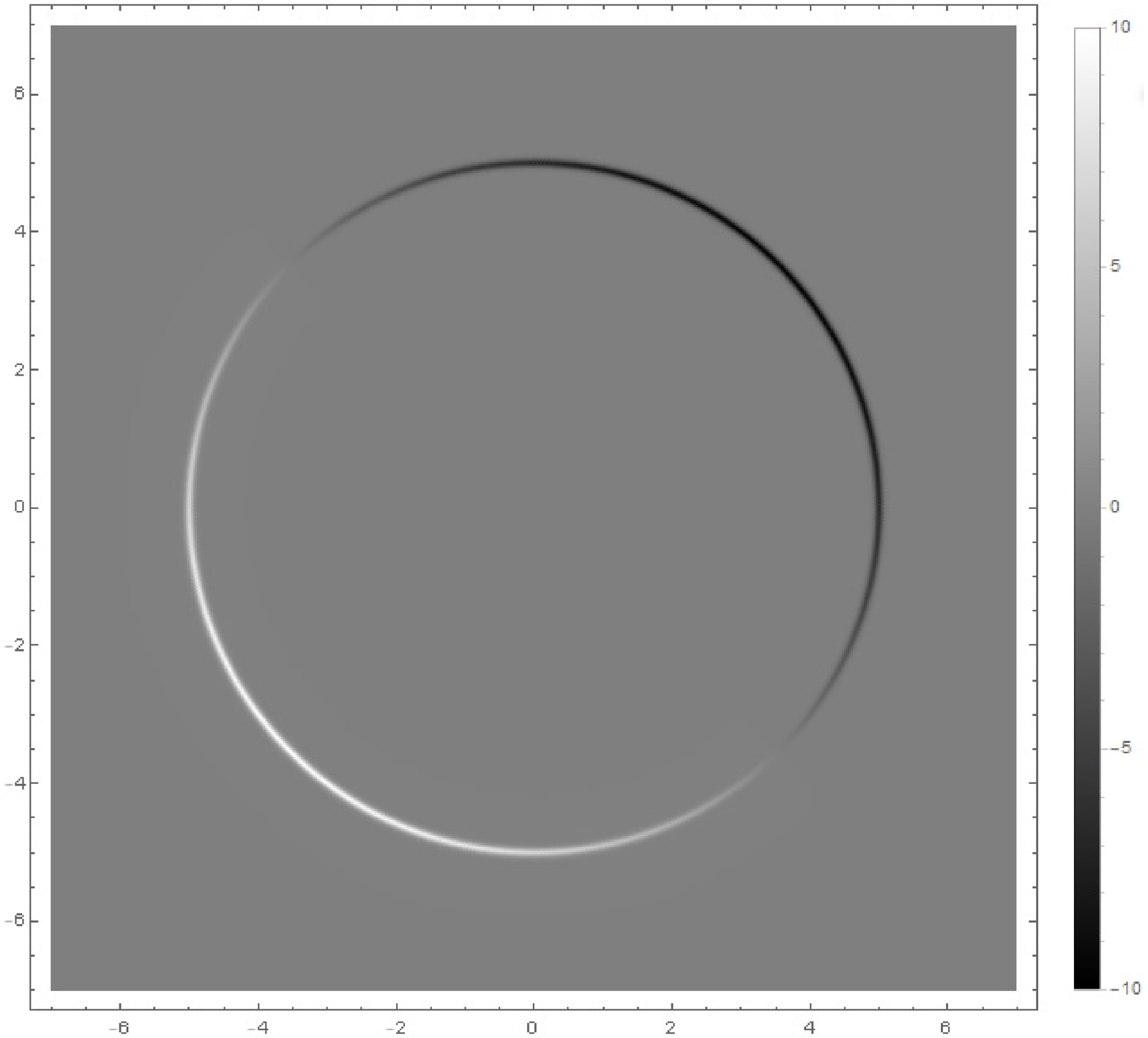}&
        \includegraphics[width=.2\textwidth]{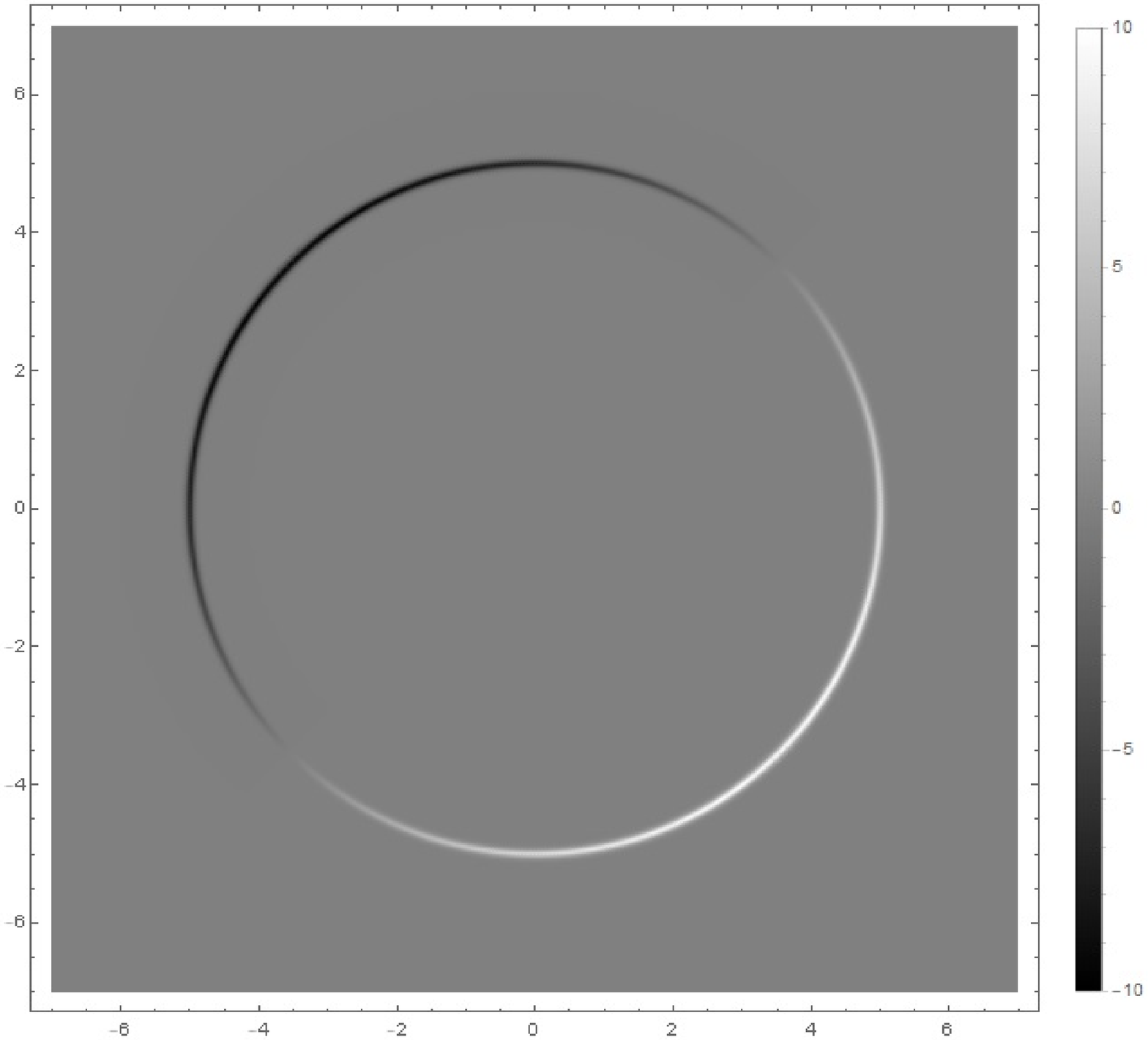}&
        \includegraphics[width=.2\textwidth]{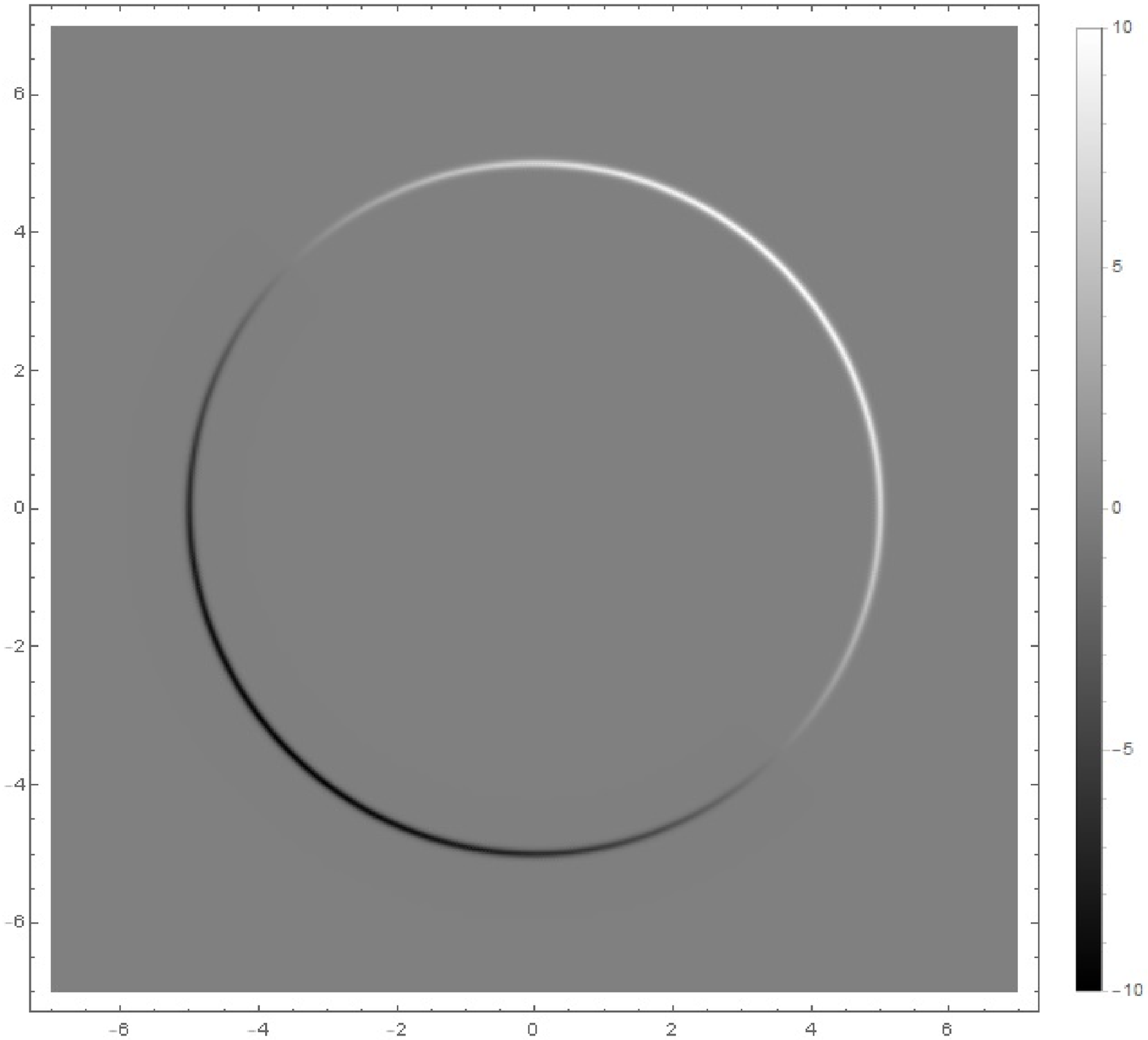}
    \end{tabular}
\caption{
Graphs of $F(x,y)=-\frac{1}{1+(0.04 \,x^2+0.04 \, y^2)^{100}}$, $Z(\frac{\pi}{4})F$, $Z(\frac{3\pi}{4})F$, 
$Z(\frac{7\pi}{4})F$ and $Z(\frac{5\pi}{4})F$.}
    \label{fig:direction_derivative}
\end{figure}

Since in $V1$ at each point we need to consider all possible orientations, coming from different receptive fields,
following \cite{petitot, citti, hoffmann}, we model $V1$ as the
fiber bundle $\cE:= \vis \times S^1 \lra \vis$, that we
call the \textit{orientation bundle}. Notice that $\vis$ is a domain
$\dom \subset \R^2$ and naturally identified with $\gan$ and $\ret$ 
with a distance preserving homeomorphism.
Then, each ganglionic receptive field $\Rprime$, after
LNG smoothing will give us a smooth section $\Theta$ of $\cE$, through
the identification we detailed above and elucidated in our example.

\section{Experiments}\label{exp-sec}

We now take into exam a popular CNN: \lenet and, based on
the analogies with the low visual pathway elucidated in 
our previous discussion, we modify it by introducing
a {\sl precortical} module.

\begin{wrapfigure}{r}{.25\textwidth}
    \includegraphics[width=.25\textwidth]{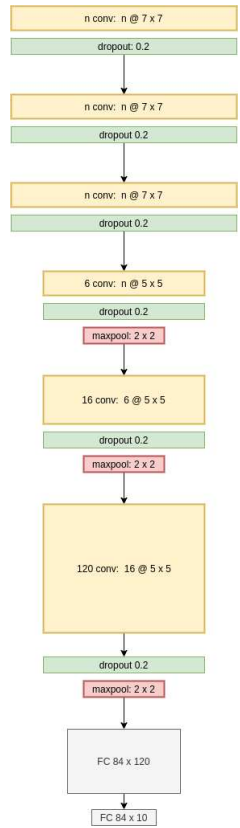}
    \caption{RetiLeNet structure; n = image color channels}
    \label{fig:model}
\end{wrapfigure}

The \lenet convolutional layers
have surprisingly strong similarities 
with the cortical directional hypercolumns: 
the stacked organization of the layers closely resembles
the feed forward structure of the simple - complex - 
hypercomplex cell path (see Sec. \ref{v1-sec}).
Also, the weights of the first convolutional layer converge 
to an integral kernel (see eq. (\ref{gan-rec})), which makes
it sensitive to directional feature extraction.
To allow the emergence of the precortical {\sl gain tuning effect}
(see Sec. \ref{intro-sec}), we structured our model as follows. 
We insert at the beginning the precortical module
consisting of a sequence of 
convolutional, dropout and hyperbolic tangent 
activation layer, repeated three times.
This number was chosen according to \cite{visual_anatomy} and corresponds to the three vertical types of cells present in the precortical portion of the visual path (bipolar cell, retinal ganglion, 
LGN neuron). The number of features in each convolutional layer was chosen equal to the number of color channels (one for MNIST and FashionMNIST, three for SVHN) and the padding was set to \verb+(kernel_size-1)/2+ to preserve the 
dimension of the input data. The output of this 
module, denominated \textit{precortical module} was then fed to a slightly modified version of \lenet (Fig. \ref{fig:model}). The number 
\verb+kernel_size+
is taken as an hyperparameter and chosen for each dataset,
according to the best performance.
Finally, a modified \lenet without the precortical module was trained on the same datasets and used for accuracy comparison. The training
for both CNNs, that we denote \reti and \lenet (with and without the precortical module),
was carried out without any image preprocessing or data augmentation using the ADAM optimizer, with learning rate $0.001$ and a batch size of 128.\\
Since we want to study the effect on robustness of the precortical module, we tested the two models on transformed test sets. In particular, we considered the two transformations:
\begin{enumerate}
    \item {\bf Mean offset (luminosity change)}: each pixel $x_i$ of the input image $X$ was shifted by an offset value $\mu$ as
    \begin{equation*}
        x_i \leftarrow x_i - \mu
    \end{equation*} From a phenomenological point of view this represents a variation in the global light intensity.
    \item {\bf Distribution scaling (contrast change)}: 
    each pixel $x_i$ of a given image is modified with a deviation parameter $\sigma$ as 
    \begin{equation*}
        x_i \leftarrow \frac{x_i - \bar{X}}{\sigma} + \bar{X}
    \end{equation*}
    with $\bar{X}$ the mean pixel value in the image. In this fashion we obtain a modification in the image contrast.
\end{enumerate}
\begin{figure}[h!]
    \hspace{-6mm}
    \begin{tabular}{c p{2mm}c} 
        \includegraphics[width=.48\textwidth]{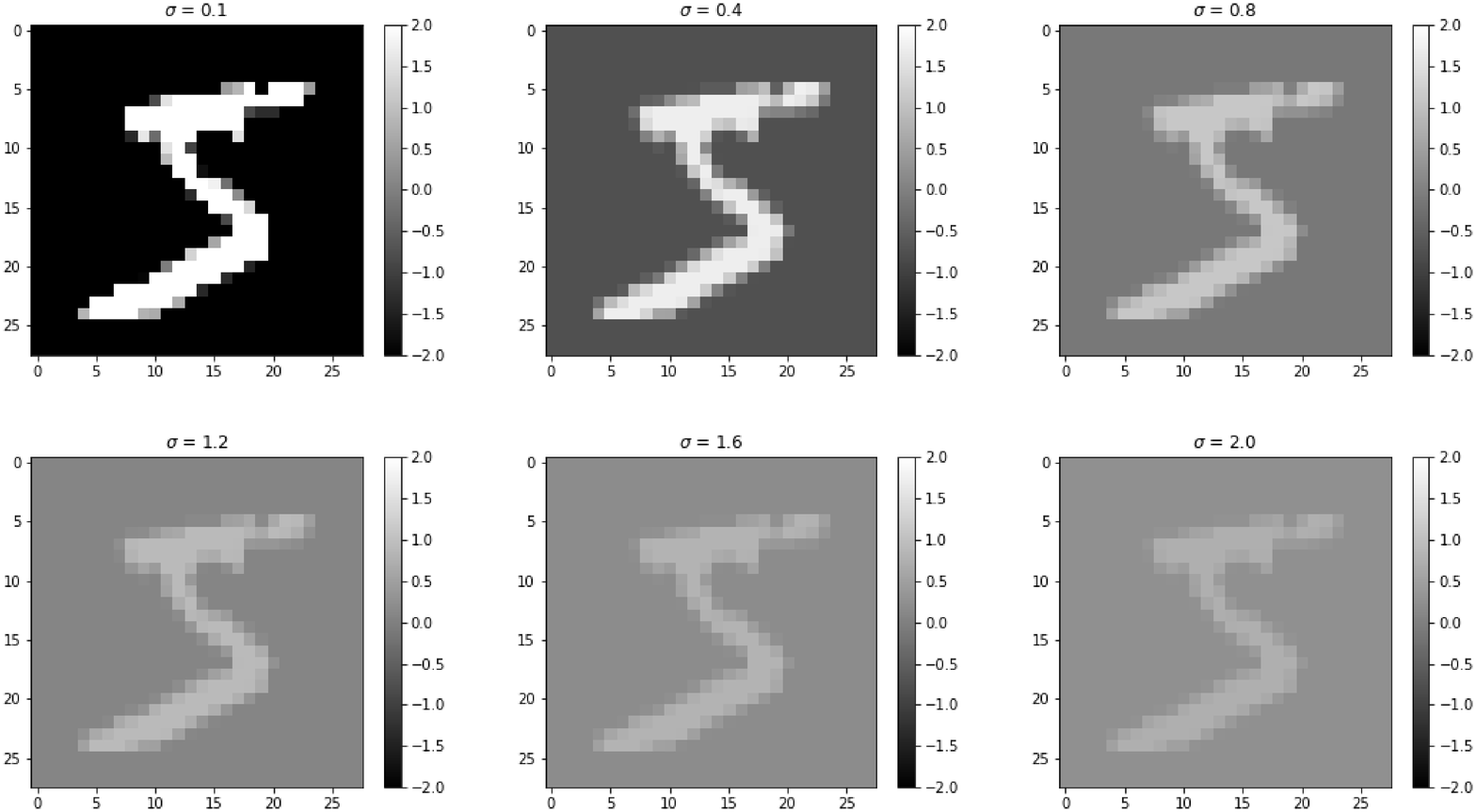}
        &&
        \includegraphics[width=.48\textwidth]{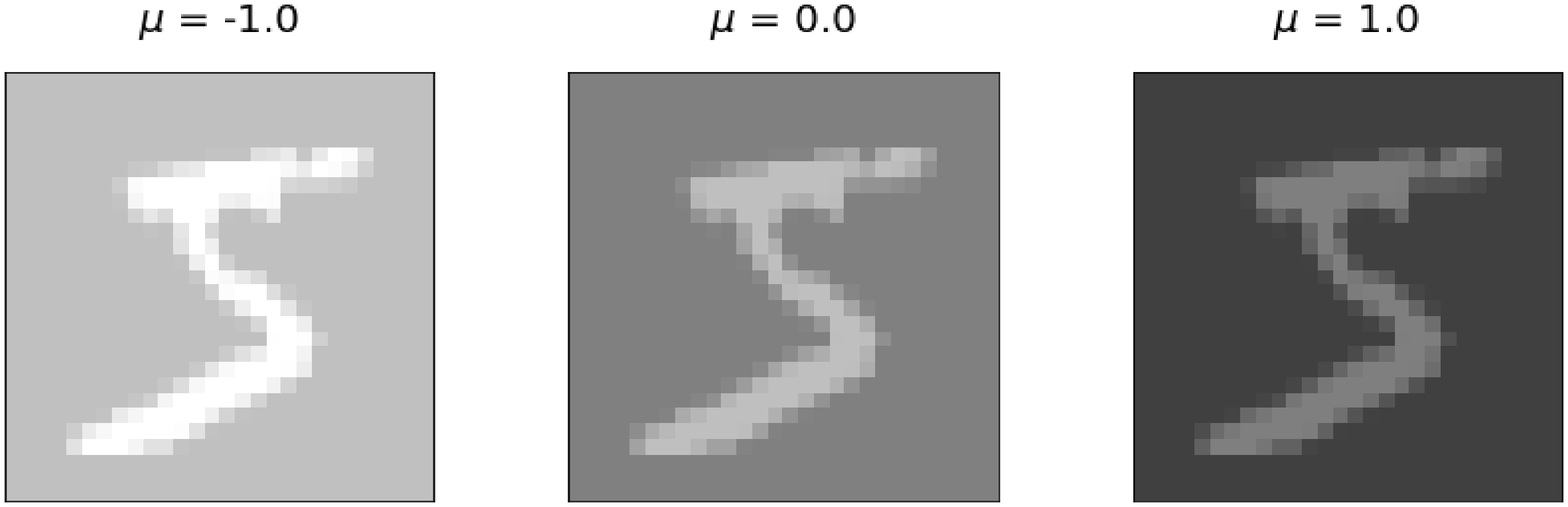}
    \end{tabular}
    \caption{
    New samples generated by contrast reduction via $\sigma$ (left) and light dimming via $\mu$ (right).} 
\end{figure}
\medskip

\subsection{Results}

After training both models \lenet and \reti (\lenet with
the precortical module) on MNIST, FashionMNIST and SVHN, 
choosing the hyperparameter \verb+kernel_size+ equal to
7 for RetiLeNet, we observe that in general, \reti performed better than \lenet on the three datasets, as shown in Table \ref{tab:accs}. We also notice that the addition of the precortical module did not impact greatly on the final accuracy for the unmodified datasets, that is taking $\mu=0$, $\sigma=1$, accounting for a slight score improvement on SVHN and, surprisingly, a slight worsening on MNIST and FashionMNIST (see \ref{tab:accs}).\\
\begin{table}
 \begin{center}
    \centering
    \begin{tabular}{r | c c c | c c c}
        \hline
         & \multicolumn{3}{c}{$\mu$} & \multicolumn{3}{c}{$\sigma$}\\
        \multicolumn{1}{l|}{\textbf{MNIST}}& -2.0 & 0 & 1.0 & 0.1 & 1.0 & 3.9 \\
         \hline
        \lenet & 0.120 & 0.991 & 0.119& 0.915 & 0.991 & 0.485 \\
        \reti 5 & 0.097 & 0.988 & 0.787 & 0.984 & 0.988 & 0.907 \\
        \hline \multicolumn{7}{c}{} \\
        \hline
        \multicolumn{1}{c}{}& \multicolumn{3}{|c|}{$\mu$} & \multicolumn{3}{c}{$\sigma$}\\
        \multicolumn{1}{l|}{\textbf{FashionMNIST}}& -2.0 & 0 & 2.0 & 0.1 & 1.0 & 3.9 \\
         \hline
        Lenet 5 & 0.168 & 0.887 & 0.100 & 0.836 & 0.887 & 0.422 \\
        Mod Lenet 5 & 0.770 & 0.880 & 0.781 & 0.805 & 0.880 & 0.806 \\
        \hline \multicolumn{7}{c}{} \\
        \hline
        & \multicolumn{3}{c}{$\mu$} & \multicolumn{3}{c}{$\sigma$}\\
        \multicolumn{1}{l|}{\textbf{SVHN}} & -2.0 & 0 & 2.0 & 0.1 & 1.0 & 3.9 \\
         \hline
        Lenet 5 & 0.806 & 0.868 & 0.006 & 0.723 & 0.868 & 0.776 \\
        Mod Lenet 5 & 0.831 & 0.886 & 0.738 & 0.849 & 0.886 & 0.832 \\
        \hline
    \end{tabular}
    \captionsetup{width=.8\textwidth}
    \caption{Accuracies achieved with and without the precortical module versus $\mu$ and $\sigma$ sweeps. Notice: for MNIST the highest $\mu$ reported is 1.0 while it is 0.2 for the other two datasets. }
    \label{tab:accs}
\end{center}
\end{table}

We notice a better performance of
\reti, with respect to \lenet, on the transformed dataset, that
is when we modify the contrast (via $\sigma$) and the light intensity
(via $\mu$), see Fig. \ref{fig:acc1}, \ref{fig:acc2}, \ref{fig:acc3}.
The improvement in performance is particularly noticeable in the case of low contrast ($\sigma$ = 3.9) and very dark examples ($\mu$ = 2 for SVHN and FashionMNIST, $\mu$=1 for MNIST), where the 
accuracy of \reti is consistently greater than 75\% 
in comparison
with \lenet accuracies, which are far lower.
The reason for this behaviour is the strong stabilizing effect that the first (convolutional) layer of the precortical module has  
on the input image, 
in the analogy to the modeling for the receptive fields
introduced in (\ref{gan-rec}).

To elucidate this stabilizing effect,  we choose an example from each dataset and look at the corresponding first hidden output, while the parameters $\mu$ and $\sigma$ vary (see Fig. \ref{fig:mnist_box}, \ref{fig:fashionmnist_box}, \ref{fig:svhn_box}). 
As far as the $\mu$ shift is concerned we see that this effect
is very evident: for large variations of $\mu$ we have a pixel average close to zero after the first convolutional layer.
In particular, 
we notice a correlation between a worse stabilizing action 
and a worse model performance, by comparing what happens with
MNIST with respect to FashionMNIST and SVHN.
Despite the small number of datasets considered, we think it may
hint to a significant phenomenon worth a further investigation.

Moreover, we can see close similarities between this effect and the actual biological lateral inhibition phenomenon 
emerging at the bipolar cell level in the retina
corresponding indeed to the first of our precortical convolutional layers. 
This biological phenomenon is responsible for the gain adjustment and border enhancement mechanisms which allow the visual system to respond correctly to strong changes in ambient light conditions.

\begin{figure}[h!]
    \centering
    \includegraphics[width=.95\textwidth]{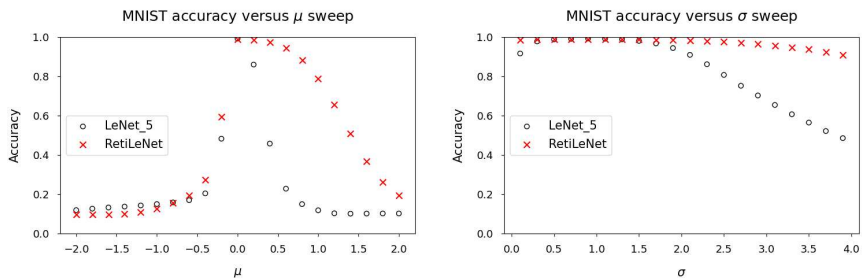}
    \caption{Accuracy for $\mu$, $\sigma$ variations in MNIST}
    \label{fig:acc1}
\end{figure}

\begin{figure}[h!]
    \centering
    \includegraphics[width=.95\textwidth]{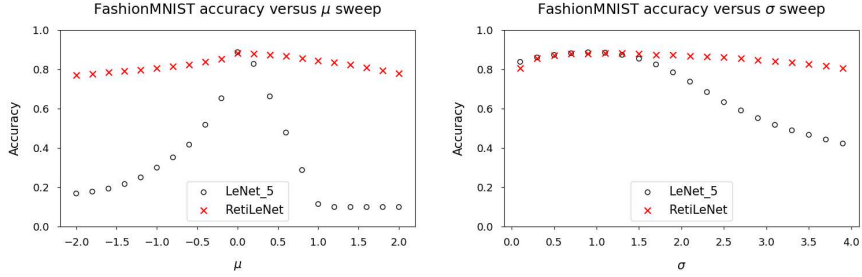}
    \caption{Accuracy for $\mu$, $\sigma$ variations in FashionMNIST}
    \label{fig:acc2}
\end{figure}

\begin{figure}[h!]
    \centering
    \includegraphics[width=.95\textwidth]{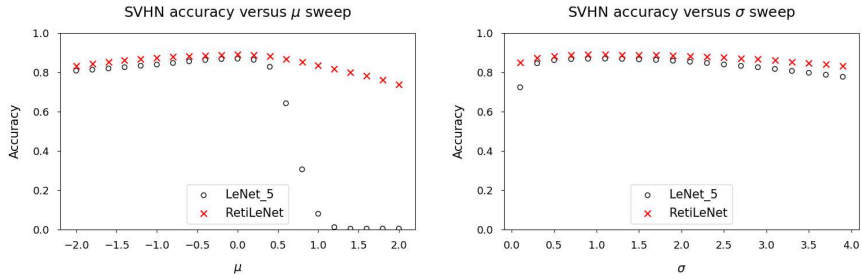}
    \caption{Accuracy for $\mu$, $\sigma$ variations in SVHN}
    \label{fig:acc3}
\end{figure}

\begin{figure}[!h]
    \centering
    \includegraphics[width=.95\textwidth]{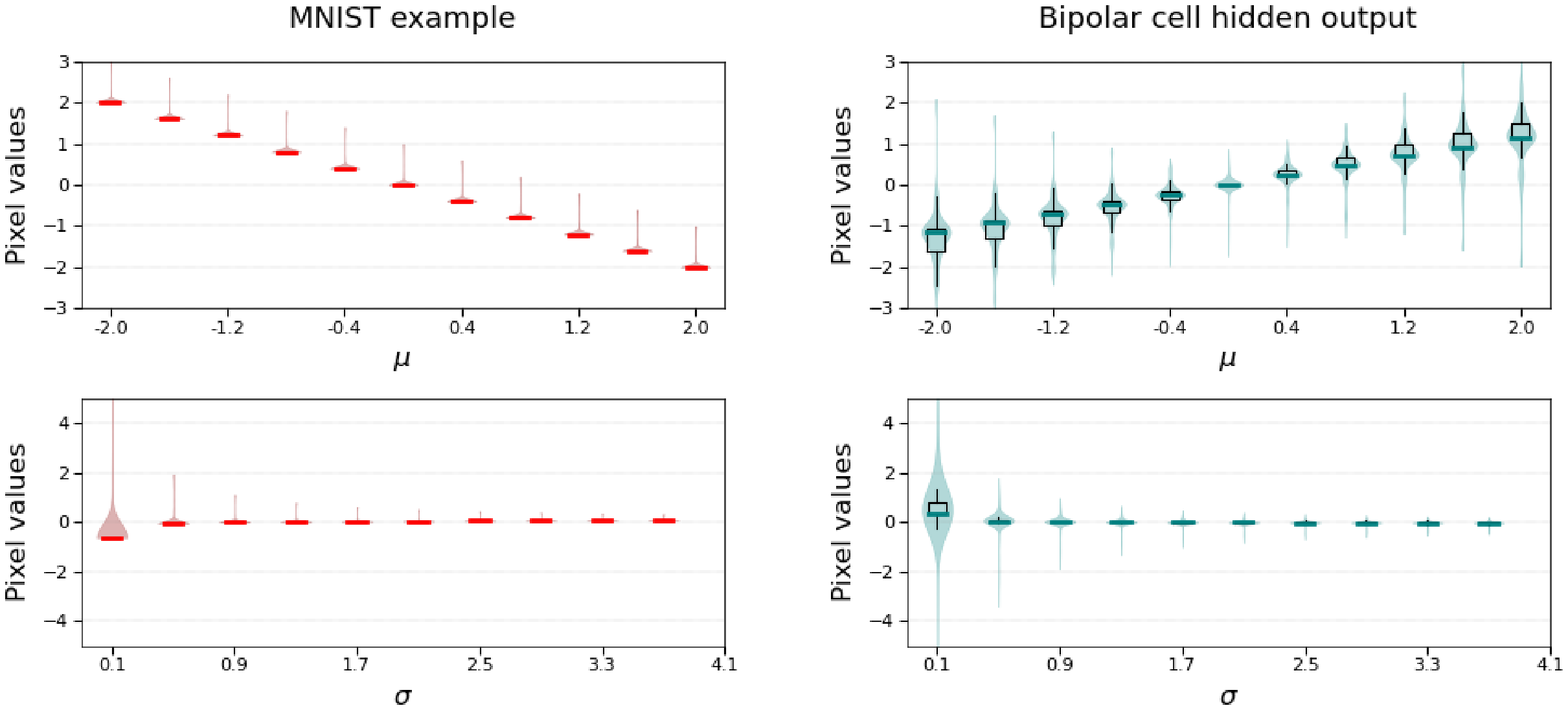}
    \caption{Pixel value distributions before (red) and after (green) first precortical convolutional layer for $\mu$, $\sigma$ variations in MNIST. (whiskers 1.5, violin kernel bandwidth 1.06)}
    \label{fig:mnist_box}
\end{figure}

\begin{figure}[h!]
    \centering
    \includegraphics[width=.95\textwidth]{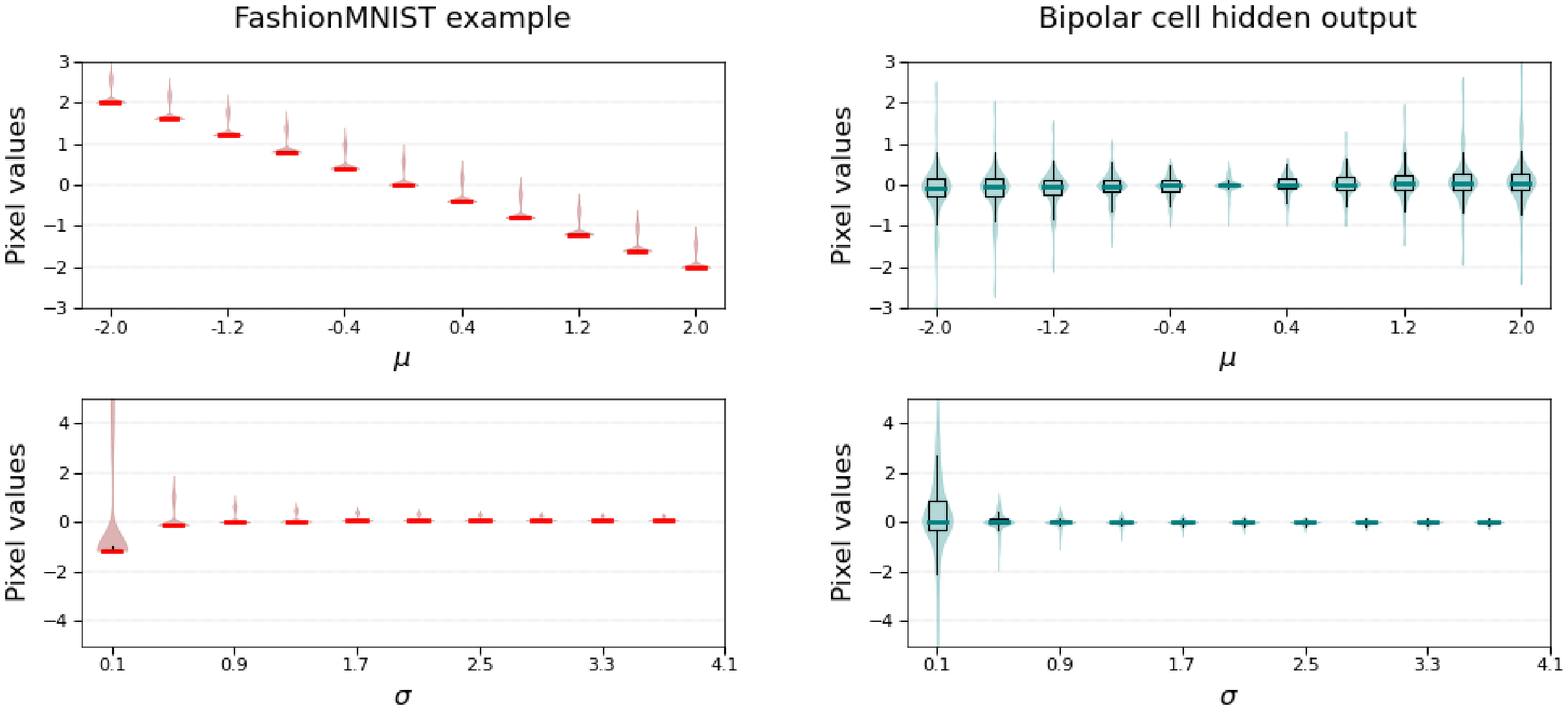}
    \caption{Pixel value distributions before (red) and after (green) first precortical convolutional layer for $\mu$, $\sigma$ variations in FashionMNIST. (whiskers 1.5, violin kernel bandwidth 1.06)}
    \label{fig:fashionmnist_box}
\end{figure}

\begin{figure}[h!]
    \centering
    \includegraphics[width=.95\textwidth]{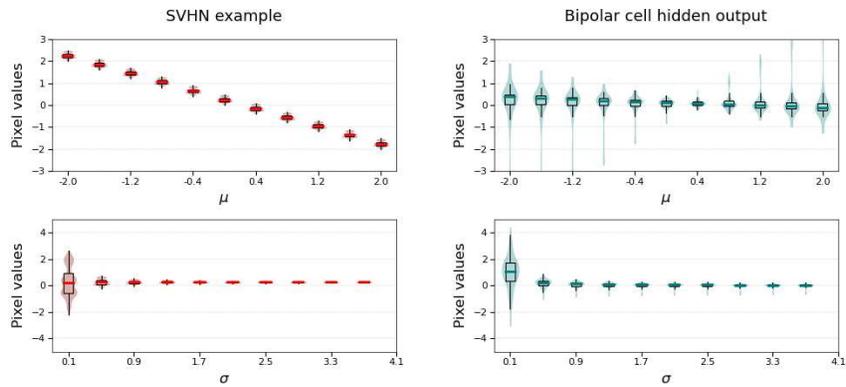}
    \caption{Pixel value distributions before (red) and after (green) first precortical convolutional layer for $\mu$, $\sigma$ variations in SVHN. (whiskers 1.5, violin kernel bandwidth 1.06)}
    \label{fig:svhn_box}
\end{figure}

\section{Conclusions} 
Our simple mathematical model of the low visual
pathway of mammals retains the descriptive accuracy   
of more complicated models and it is better suited to elucidate
the similarities between visual physiological structures
and CNNs. The precortical neuronal module,
inspired by our mathematical modeling, 
when added to a popular CNN (\lenet), gives
a CNN \reti that mimics the border
and contrast enhancing effect as well as the mean light decorrelation action of horizontal-bipolar cells, retinal ganglions and LGN neurons.
Hence
such addition, which performed
extremely well on datasets with large variations
of light intensity and contrasts, improves the CNN robustness with respect to 
such variations in generated input images, 
strongly improving its inferential power on data
not belonging to the training statistics (Fig. \ref{fig:acc1},
\ref{fig:acc2}, \ref{fig:acc3}).
We believe that this strong improvement is directly correlated with the stabilizing action performed by the first 
precortical convolutional layer, in complete analogy with the behaviour of bipolar cells in the retina.
We validate our hypotheses
obtaining our results on MNIST, FashionMNIST and SVHN datasets.


\section{Acknowledgements}
R. Fioresi wishes to thank Prof. Citti, 
Prof. Sarti and Prof. Breveglieri for helpful discussions and comments.

\newpage

\bibliographystyle{plain}
\bibliography{jr-iclr-lanl}

\appendix
\section{Proof of Proposition \ref{lipschitz-prop}}\label{app-sec1}

We first notice that, from a mathematical point of view,
we can  identify $E$ and $\gan$, 
hence we will state our result in this simplified, but 
mathematically equivalent fashion. 

In our notation, $D$ is the domain $E$ identified with $\gan$
and $\cS$ our receptive field $\cR$, while $S$ denotes $\Rprime$.
Notice that, given our physiological setting, both
$\cR$ and $\Rprime$ are bounded on $D$.

For notation and main definitions, see \cite{rudin}. 
\begin{proposition}
Let $D$ be a compact domain in $\R^2$. Consider the function:
\begin{equation}\label{gan-rec}
\cS:D \lra \R, \quad
\cS(x,y) = \int_{U_{\rho}(x,y)}\cK(u,v)S(u,v)\,du\,dv 
\end{equation}
where $S:D \lra \R$ is an arbitrary function, $S(D)$ is bounded and
(for a fixed $\rho$):      
\begin{equation*}
                U_{\rho}(x,y)=\big\{(u,v) \in \mathbb{R}^2 : 
(u-x)^2+(v-y)^2 \leq \rho^2\big\},
\end{equation*}
 \begin{equation*}
                \cK(x,y)=\begin{cases} \pm 1 & \mathrm{if} \quad
(u-x)^2+(v-y)^2 \leq (\rho-\epsilon)^2 \\ \\
\mp 1 & \mathrm{if} \quad
(\rho-\epsilon)^2 < (u-x)^2+(v-y)^2 \leq \rho^2 
\end{cases}
\end{equation*}

Then $\cS$ is Lipschitz continuous in both variables on $D$.
\end{proposition}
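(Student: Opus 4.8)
The plan is to exploit the piecewise-constant structure of the kernel $\cK$ in order to reduce the statement to a purely geometric estimate on discs. First I would fix the standing conventions: for the integral defining $\cS$ to make sense I read ``arbitrary bounded function'' as bounded and Lebesgue measurable, and — since $U_\rho(x,y)$ protrudes outside $D$ near $\partial D$ — I extend $S$ by $0$ to all of $\R^2$, which leaves it bounded, say $|S|\le M$. The key observation is that $\cK$ is constant ($+1$ then $-1$, the opposite sign choice being identical) on the two concentric regions, so, writing $U_r(x,y)$ for the closed disc of radius $r$ centred at $(x,y)$ and
$I_r(x,y):=\int_{U_r(x,y)} S\,du\,dv$, one has
\beq
\cS(x,y) = \int_{U_{\rho-\epsilon}(x,y)} S \;-\; \int_{U_\rho(x,y)\setminus U_{\rho-\epsilon}(x,y)} S \;=\; 2\,I_{\rho-\epsilon}(x,y) - I_\rho(x,y).
\eeq
Since a finite linear combination of Lipschitz functions is Lipschitz, it then suffices to prove that for each fixed radius $r$ the disc-integral $I_r$ is Lipschitz on $D$.

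For the second step I would estimate the increment of $I_r$ directly. For two points $p=(x,y)$, $p'=(x',y')$ at distance $d:=\|p-p'\|$, the common overlap $U_r(p)\cap U_r(p')$ cancels in $I_r(p)-I_r(p')$, leaving only the integrals over $U_r(p)\setminus U_r(p')$ and $U_r(p')\setminus U_r(p)$; bounding $|S|\le M$ gives
\beq
|I_r(p)-I_r(p')| \le M\,\bigl|\,U_r(p)\,\triangle\,U_r(p')\,\bigr|,
\eeq
where $\triangle$ is the symmetric difference and $|\cdot|$ is Lebesgue measure. The crux is the geometric estimate that this symmetric difference has measure $O(d)$. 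I would argue that if $q\in U_r(p)\setminus U_r(p')$ then $\|q-p\|\le r$ while $\|q-p'\|>r$, and the triangle inequality gives $\|q-p\|\ge \|q-p'\|-d > r-d$; hence this set lies in the annulus $\{\,r-d<\|q-p\|\le r\,\}$ of area $\pi r^2-\pi(r-d)^2\le 2\pi r d$. The symmetric term is bounded the same way, so $|U_r(p)\triangle U_r(p')|\le 4\pi r d$ for all $d\ge 0$ (when $d>r$ one instead uses the crude bound $|U_r(p)\setminus U_r(p')|\le \pi r^2<\pi r d$).

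Combining the two steps yields $|I_r(p)-I_r(p')|\le 4\pi r M\,d$, so each $I_r$ is Lipschitz and therefore $\cS=2I_{\rho-\epsilon}-I_\rho$ is Lipschitz on $D$ with constant at most $4\pi M(3\rho-2\epsilon)$; since Lipschitz continuity in the joint variable implies it separately in each variable, the statement follows. The main obstacle is entirely concentrated in the symmetric-difference estimate: this is precisely the point where the finite length of the jump curves of $\cK$ — here, circles — produces a bound \emph{linear} in $d$, rather than the merely $o(1)$ modulus one would obtain from an arbitrary bounded kernel. Conceptually this is the statement that translation is Lipschitz in $L^1$ against a kernel of bounded variation; I find the elementary annulus computation cleaner than invoking that machinery, and it has the advantage of making the dependence of the Lipschitz constant on $\rho$ and $\epsilon$ explicit.
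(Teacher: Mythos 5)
Your proof is correct, and its skeleton is the same as the paper's: cancel the contribution of the overlap of the two discs, bound what remains by the measure of their symmetric difference, and show that this measure is linear in $\|p-q\|$. You diverge at two technical points, and in both your version is the more complete one. First, the paper's computation silently drops the kernel, writing $\cS(p)=\int_A S$ rather than $\int_A \cK S$; your decomposition $\cS = 2I_{\rho-\epsilon}-I_\rho$ is what actually justifies ignoring the sign structure of $\cK$, since it reduces the statement by linearity to plain disc integrals $I_r$ (at the harmless cost of a slightly larger constant). Second, for the crucial area estimate the paper splits into the cases $\|p-q\|>2\rho$ and $\|p-q\|\le 2\rho$ and, in the latter, appeals to the explicit lens-area formula being ``an algebraic function of $\|p-q\|$''; that observation alone does not yield a \emph{linear} bound (algebraic functions such as $\sqrt{d}$ fail to be Lipschitz at $0$), so one would still have to check that the derivative of that formula, namely $\sqrt{4\rho^2-d^2}\le 2\rho$, is bounded. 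Your containment of $U_r(p)\setminus U_r(p')$ in the annulus $r-d<\|q-p\|\le r$ delivers the bound $2\pi r d$ in two lines, handles both of the paper's cases uniformly, and makes the final Lipschitz constant $4\pi M(3\rho-2\epsilon)$ explicit. The measurability convention and the extension of $S$ by zero outside $D$, which you state up front, are indeed needed for $\cS$ to be well defined near $\partial D$ and are left implicit in the paper.
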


\begin{proof}
Since $S(D)$ is bounded, we have $N \leq S(x,y) \leq M$
for all $(x,y)\in E$.
We need to show
that there exists $L \in \mathbb{R}$ so that 
\begin{equation}\label{eq-lipsch}
        |\cS(p )-\cS(q )| < L\,\|p -q \|, 
\hspace{5mm} \quad \hbox{for all} \quad p=(x_p,y_p) ,\,q=(x_q,y_q) \in E
    \end{equation}

Let $A=U_{\rho}(x_p,y_p)$
and $B=U_{\rho}(x_q,y_q)$. Then: 
    \begin{align*}
        |\cS(p)-\cS(q)| &= 
\Big|\int_{A}S(u,v)\, du \, dv - \int_{B}S(u,v)\, du \, dv \Big|\\
        &= \Big|\int_{A\smallsetminus B}S(u,v)\, du \, dv - 
\int_{B\smallsetminus A}S(u,v)\, du \, dv \Big| \\
        &\leq \Big|\int_{A\smallsetminus B}M\, du \, dv -
        \int_{B\smallsetminus A}N\, du \, dv \Big|
    \end{align*}
    We shall now consider two cases:
    \begin{enumerate}
        \item $\|p-q\| > 2\rho$: 
this implies that $A \smallsetminus B = B \smallsetminus A = \varnothing$. In this case 
        \begin{equation*}
            \Big|\int_{A\smallsetminus B}M \, du \, dv - 
\int_{B\smallsetminus A}N\, du \, dv \Big| =  
\left(M-N\right)\;\pi \rho^2
        \end{equation*}
        so that if we choose $M_1\in \mathbb{R}$ defined as
        \begin{equation*}
            M_1 = \left(M-N\right)\;\pi \rho
        \end{equation*}
we obtain the required equality (\ref{eq-lipsch}).

        \item $\|p-q\| \leq 2\rho$: in this case
        \begin{align*}
 \Big|\int_{A\smallsetminus B}M \, du \, dv - 
\int_{B\smallsetminus A}N\, du \, dv \Big|  =  
\left(M-N\right) \mu(A\smallsetminus B)\
        \end{align*}
        where $\mu(A\smallsetminus B)$ 
is the area of $A\smallsetminus B$, which is the same as $B\smallsetminus A$.
  
    By looking at the explicit analytic elementary formula for such area, we see
that it is an algebraic function of  $\|p-q\|$, hence we can find $M_2$
satisfying (\ref{eq-lipsch}).
  \end{enumerate}

    Finally, defining
    \begin{equation*}
        L = \max{ \Big\{M_1, M_2\Big\}}
    \end{equation*}
we obtain our result.
\end{proof}

\end{document}